
\typeout{IJCAI--21 Instructions for Authors}


\documentclass{article}
\pdfpagewidth=8.5in
\pdfpageheight=11in
\usepackage{ijcai21}

\usepackage{times}
\usepackage{soul}
\usepackage{url}
\usepackage[hidelinks]{hyperref}
\usepackage[utf8]{inputenc}
\usepackage[small]{caption}
\usepackage{graphicx}
\usepackage{amsmath}
\usepackage{amsthm}
\usepackage{booktabs}
\usepackage{algorithm}
\usepackage{algorithmic}
\urlstyle{same}
\usepackage{listings}
\usepackage{subcaption}
\usepackage[colorinlistoftodos,prependcaption,textsize=tiny]{todonotes}


\newtheorem{example}{Example}
\newtheorem{theorem}{Theorem}
\newtheorem{definition}{Definition}
\newtheorem{corollary}{Corollary}




\pdfinfo{
/TemplateVersion (IJCAI.2021.0)
}

\title{grASP: A Graph Based ASP-Solver and Justification System}

\author{
Fang Li$^1$\footnote{Contact Author}\and
Huaduo Wang$^1$\and
Gopal Gupta$^1$\\
\affiliations
$^1$University of Texas at Dallas\\
\emails
\{fang.li, huaduo.wang, gupta\}@utdallas.edu
}

\begin{document}

\maketitle

\begin{abstract}

Answer set programming (ASP) is a popular nonmonotonic-logic based paradigm for  knowledge representation and solving combinatorial problems. Computing the answer set of an ASP program is NP-hard in general, and researchers have been investing significant effort to speed it up. The majority of current ASP solvers employ SAT solver-like technology to find these answer sets. As a result, justification for why a literal is in the answer set is hard to produce. There are  dependency graph based approaches to find answer sets, but due to the representational limitations of dependency graphs, such approaches are limited. We propose a novel dependency graph-based approach for finding answer sets in which conjunction of goals is explicitly represented as a node which allows arbitrary answer set programs to be uniformly represented. Our representation preserves causal relationships allowing for justification for each literal in the answer set to be elegantly found. Performance results from an implementation are also reported. Our work paves the way for computing answer sets without grounding a program.
  
\end{abstract}

\section{Introduction} \label{sec:introduction}

Answer set programming (ASP) \cite{GL2,MT5,EFPL12,SNS18} is a popular nonmonotonic-logic based paradigm for  knowledge representation and solving combinatorial problems. Computing the answer set of an ASP program is NP-hard in general, and researchers have been investing significant effort to speed it up. Most ASP solvers employ SAT solver-like technology to find these answer sets. As a result, justification for why a literal is in the answer set is hard to produce. There are dependency graph (DG) based approaches to find answer sets, but due to the representational limitations of dependency graphs, such approaches are limited. In this paper we propose a novel dependency graph-based approach for finding answer sets in which conjunctions of goals is explicitly represented as a node which allows arbitrary answer set programs to be uniformly represented. Our representation preserves causal relationships allowing for justification for each literal in the answer set to be elegantly found. Performance results from an implementation are also reported. Our work paves the way for computing answer sets without grounding a program.

Compared to SAT solver based implementations,  graph-based implementations of ASP have not been well studied. Very few researchers have investigated graph-based techniques. NoMoRe system \cite{anger2001nomore} represents ASP programs with a \textit{block graph} (a labeled graph) with meta-information, then computes the A-coloring (non-standard graph coloring with two colors) of that graph to obtain answer sets. Another approach \cite{konczak2005graphs} uses \textit{rule dependency graph} (nodes for rules, edges for rule dependencies) to represent ASP programs, then performs graph coloring algorithm to determine which rule should be chosen to generate answer sets. Another group \cite{linke2005suitable} proposed an hybrid approach which combines different kinds of graph representations that are suitable for ASP. The hybrid graph uses both rules and literals as nodes, while edges represent dependencies. It also uses the A-coloring technique to find answer sets.

All of the above approaches were well designed, but their graph representations are complex as they all rely on extra information to map the ASP elements to nodes and edges of a graph. In contrast, our approach uses a much simpler graph representation, where nodes represent literals and an edge represent the relationship between the nodes it connects. Since this representation faithfully reflects the casual relationships, it is capable of producing causal justification for goals entailed by the program.


\section{Background} \label{sec:background}

\subsection{Answer Set Programming} \label{sec:asp}   

Answer Set Programming (ASP) is a declarative paradigm that extends logic programming with negation-as-failure. ASP is a highly expressive paradigm that can elegantly express complex reasoning methods, including those used by humans, such as default reasoning, deductive and abductive reasoning, counterfactual reasoning, constraint satisfaction~\cite{baral,gelfond2014knowledge}.

ASP supports better semantics for negation ({\it negation as failure}) than does standard logic programming and Prolog. An ASP program consists of rules that look like Prolog rules. The semantics of an ASP program {$\Pi$} is given in terms of the answer sets of the program \texttt{ground($\Pi$)}, where \texttt{ground($\Pi$)} is the program obtained from the substitution of elements of the \textit{Herbrand universe} for variables in $\Pi$~\cite{baral}.
Rules in an ASP program are of the form:
\begin{equation}
\resizebox{.91\linewidth}{!}{$
    \texttt{p :- q$_1$, ..., q$_m$, not r$_1$, ..., not r$_n$.}
    $}
\label{rule1}
\end{equation}
\noindent where $m \geq 0$ and $n \geq 0$. Each of \texttt{p} and \texttt{q$_i$} ($\forall i \leq m$) is a literal, and each \texttt{not r$_j$} ($\forall j \leq n$) is a \textit{naf-literal} (\texttt{not} is a logical connective called \textit{negation-as-failure} or \textit{default negation}). The literal \texttt{not r$_j$} is true if proof of {\tt r$_j$} \textit{fails}. Negation as failure allows us to take actions that are predicated on failure of a proof. 
Thus, the rule {\tt r :- not s.} states that {\tt r} can be inferred if we fail to prove {\tt s}. 
Note that in Rule \ref{rule1}, {\tt p} is optional. Such a headless rule is called a constraint, which states that conjunction of {\tt q$_i$}'s and \texttt{not r$_j$}'s should yield \textit{false}. Thus, the constraint {\tt :- u, v.} states that {\tt u} and {\tt v} cannot be both true simultaneously in any model of the program (called an answer set).

The declarative semantics of an Answer Set Program \texttt{P} is given via the Gelfond-Lifschitz transform~\cite{baral,gelfond2014knowledge} in terms of the answer sets of the program \texttt{ground($\Pi$)}. 
More details on ASP can be found elsewhere~\cite{baral,gelfond2014knowledge}. 


\subsection{Dependency Graph}\label{sec:dg}

A dependency graph \cite{linke2005suitable} uses nodes and directed edges to represent dependency relationships of an ASP rule. 

\begin{definition}
The dependency graph of a program is defined on its literals s.t. there is a positive (resp. negative) edge from $p$ to $q$ if $p$ appears positively (resp. negatively) in the body of a rule with head $q$.
\label{def1}
\end{definition}

Conventional dependency graphs are not able to represent ASP programs uniquely. This is due to the inability of dependency graphs to distinguish between non-determinism (multiple rules defining a proposition) and conjunctions (multiple conjunctive sub-goals in the body of a rule) in logic programs. For example, the following two programs have identical dependency graphs (Figure \ref{fig:fig1}).

\begin{lstlisting}[language=prolog]
    %% program 1
    p :- q, not r, not p.
    %% program 2
    p :- q, not p.       p :- not r.
\end{lstlisting}

\begin{figure}[tb]
    \centering
    \includegraphics[scale=0.3]{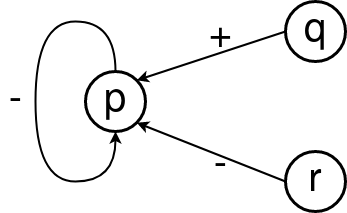}
    \caption{Dependency Graph for Programs 1 \& 2}
    \label{fig:fig1}
\end{figure}

To make conjunctive relationships representable by dependency graphs, we first transform it slightly to come up with a novel representation method. This new representation method, called conjunction node representation (CNR) graph, uses an artificial node to represent conjunction of sub-goals in the body of a rule. This conjunctive node has a directed edge that points to the rule head (Figure \ref{fig:fig2}).

\begin{figure}[tb]
    \centering
    \begin{subfigure}[b]{0.5\linewidth}
        \centering
        \includegraphics[scale=0.3]{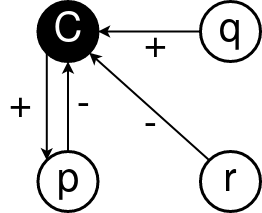}
        \caption{CNR for Program 1}
        \label{fig:fig2a}
    \end{subfigure}\hfill
    \begin{subfigure}[b]{0.5\linewidth}
        \centering
        \includegraphics[scale=0.3]{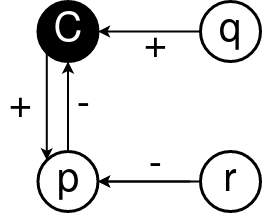}
        \caption{CNR for Program 2}
        \label{fig:fig2b}
    \end{subfigure}
    \caption{CNRs for program 1 \& 2}
    \label{fig:fig2}
\end{figure}

The conjunction node, which is colored black, refers to the conjunctive relation between the in-coming edges from nodes representing subogals in the body of a rule. Note that a CNR graph is not a conventional dependency graph.

\subsubsection{Converting CNR Graph to Dependency Graph} \label{sec:cnrtodg}
Since CNR graph does not follow the dependency graph convention, we need to convert it to a proper dependency graph in order to perform dependency graph-based reasoning. We use a simple technique to convert a CNR graph to an equivalent conventional dependency graph. We negate all in-edges and out-edges of the conjunction node. This process essentially converts a conjunction into a disjunction. Once we do that we can treat the conjunction node as a normal node in a dependency graph. As an example, Figure \ref{fig:fig3} shows the CNR graph to dependency graph transformation for Program 3:
\begin{figure}[tb]
    \centering
    \includegraphics[scale=0.3]{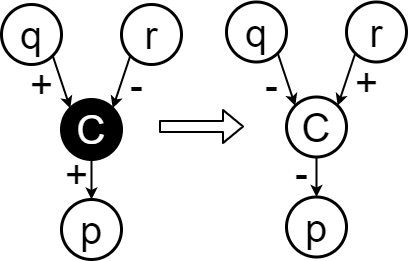}
    \caption{CNR-DG Transformation}
    \label{fig:fig3}
\end{figure}

\begin{lstlisting}[language=prolog]
    %% program 3
    p :- q, not r.
\end{lstlisting}

This transformation is a simple application of De Morgan's law. The rule in program 3 represents:

\begin{lstlisting}[language=prolog]
    p :- C.      C :- q, not r.
\end{lstlisting}

\noindent The transformation produces the equivalent rules:

\begin{lstlisting}[language=prolog]
   p :- not C.  C :- not q.  C :- r.
\end{lstlisting}

\noindent Since conjunction nodes are just helper nodes which allow us to perform dependency graph reasoning, we don't report them in the final answer set.

\subsubsection{Constraint Representation} \label{sec:constraintnode}
ASP also allows for special types of rules called constraints. There are two ways to encode constraints: (i) headed constraint where negated head is called directly or indirectly in the body (e.g., Program 4), and (ii) headless constraints (e.g., Program 5). 

\begin{lstlisting}[language=prolog]
    %% program 4
    p :- not q, not r, not p.
    
    %% program 5
    :- not q, not r.
\end{lstlisting}

Our algorithm models these constraint types separately. For the former one, we just need to apply the CNR-DG transformation directly. Note that the head node connects to the conjunction node both with an in-coming edge and an out-going edge (Figure \ref{fig:fig4a}). For the headless constraint, we create a head node with truth value as \textit{False}. 

\begin{figure}[tb]
    \centering
    \begin{subfigure}[b]{0.5\linewidth}
        \centering
        \includegraphics[scale=0.3]{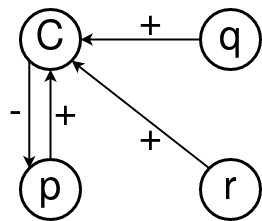}
        \caption{Program 4}
        \label{fig:fig4a}
    \end{subfigure}\hfill
    \begin{subfigure}[b]{0.5\linewidth}
        \centering
        \includegraphics[scale=0.3]{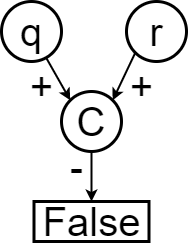}
        \caption{Program 5}
        \label{fig:fig4b}
    \end{subfigure}
    \caption{Constraint DG}
    \label{fig:fig4}
\end{figure}

The reason why we don't treat a headless constraint the same way as a headed constraint is because in the latter case, if head node ({\tt p} in Program 4) is provable through another rule, then the headed constraint is inapplicable. Therefore, we cannot simply assign a false value to its head.

\subsection{Cycles in Program} \label{sec:cycle}

Answer set programs have a notion of odd and even loops over negation as well as positive loops. We refer to loops as cycles here.  Note that a positive (resp. negative) edge is an edge labeled with `+' (resp. `-').

\begin{definition}
A cycle $C$ in a dependency graph is a Positive Cycle iff all edges in $C$ are positive.
\label{def_pc}
\end{definition}

\begin{definition}
A cycle $C$ in a dependency graph is a Negative Even Cycle (NEC) iff $C$ has even number of negative edges.
\label{def_nec}
\end{definition}

\begin{definition}
A cycle $C$ is a Negative Odd Cycle (NOC) iff $C$ has odd number of negative edges.
\label{def_noc}
\end{definition}

\noindent 
Based on above definitions and the stable model semantics, three results trivially follow:

\begin{theorem} 
If $C$ is a positive cycle, then all nodes in $C$ will have the same truth value. 
\label{theo_pc}
\end{theorem}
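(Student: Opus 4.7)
The plan is to derive the conclusion by combining the semantics of positive edges with the supportedness and minimality conditions of the stable model semantics. Label the positive cycle as $n_1 \to n_2 \to \dots \to n_k \to n_1$. By Definition~\ref{def1}, each positive edge $n_i \to n_{i+1}$ comes from a rule whose head is $n_{i+1}$ and which contains $n_i$ positively in its body; after the CNR--DG transformation of Section~\ref{sec:cnrtodg}, the conjunctive content of such a rule is absorbed into an auxiliary conjunction node, so within the transformed dependency graph the cycle edge represents a direct positive support relationship between $n_i$ and $n_{i+1}$.

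The first step I would carry out is a forward-propagation argument: if $n_i$ is true in an answer set $M$, then the supporting rule witnessed by the edge $n_i \to n_{i+1}$ has a satisfied body in $M$, so by supportedness $n_{i+1} \in M$ as well. Iterating around the cycle of length $k$ yields that all nodes in $C$ are true.

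The second step is the converse direction via Gelfond--Lifschitz minimality: if some $n_j \in C$ is false, then no other $n_i \in C$ can be justified purely by the cyclic positive support, because a stable model coincides with the least model of its positive reduct and that least model does not contain atoms whose only derivations are self-supporting positive loops. Propagating backwards around $C$ then forces every node in $C$ to also be false. Combining both directions yields that the truth values of $n_1, \dots, n_k$ agree.

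I expect the main obstacle to be the step ``$n_i$ true forces the rule for $n_{i+1}$ to fire,'' since in the raw program a rule body may contain further conjuncts beyond $n_i$. The key observation making this step clean is precisely the CNR representation introduced earlier: because conjunctions are factored out into their own nodes, each positive edge in the converted dependency graph faithfully carries the full (single-literal) support needed for the propagation, and the cyclic induction closes without having to reason about residual body conjuncts.
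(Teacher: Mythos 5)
The paper itself offers no proof of Theorem~\ref{theo_pc} (it is asserted to follow trivially from the definitions and the stable model semantics), so your attempt must stand on its own. Its core idea---forward propagation of \emph{True} around the cycle, so that the set of true nodes on $C$ is closed under the successor relation and hence is either empty or all of $C$---is the right one, and it is all you need: the second step invoking Gelfond--Lifschitz minimality is superfluous and, as phrased, shaky (a node of $C$ \emph{can} be justified by a rule outside the cycle; what rules out a mixed assignment is precisely step one, not backward propagation of falsity). Also, in step one the property you use is that an answer set is closed under the rules (body satisfied implies head in the model), not ``supportedness,'' which is the converse condition.

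The genuine gap is in the step you yourself flag as the crux. You claim that after the CNR--DG transformation of Section~\ref{sec:cnrtodg} ``the conjunctive content of such a rule is absorbed into an auxiliary conjunction node, so within the transformed dependency graph the cycle edge represents a direct positive support relationship between $n_i$ and $n_{i+1}$.'' That is not how the transformation works: it \emph{negates} all in- and out-edges of the conjunction node, so a positive occurrence of $n_i$ in a conjunctive body for $n_{i+1}$ yields a negative edge $n_i \rightarrow C$ and a negative edge $C \rightarrow n_{i+1}$, and contributes \emph{no} positive edge from $n_i$ to $n_{i+1}$ at all. The correct justification of the single-literal-support claim is the complementary observation: a conjunction node's only out-going edge (to the rule head) is negative, so no conjunction node can lie on a positive cycle; hence every edge of a positive cycle is a direct edge arising from a rule whose body is exactly the single positive literal at its tail, and only then does ``$n_i$ true forces $n_{i+1}$ true'' go through. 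This step carries all the content of the theorem, because for the plain dependency graph of Definition~\ref{def1}, without the conjunction-node representation, the statement is simply false: in the program \texttt{p :- q, r.}\ \texttt{q :- p.}\ \texttt{q :- s.}\ \texttt{s.} the literals \texttt{p} and \texttt{q} form a positive cycle, yet the answer set $\{\texttt{s},\texttt{q}\}$ assigns them different truth values. So the argument is salvageable, but the justification you give for its key step must be replaced by the conjunction-node/negative-out-edge argument above.
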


\begin{theorem}
If $C$ is a NEC, then it will admit multiple answer sets obtained by breaking arbitrary negative edges. 
\label{theo_nec}
\end{theorem}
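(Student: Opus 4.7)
The plan is to exploit the parity structure of negations in the cycle. First, I would fix a NEC $C$ with $2k$ negative edges ($k\geq 1$) and observe, by combining Definition \ref{def_nec} with Theorem \ref{theo_pc}, that traversing $C$ once flips the truth value of any node exactly $2k$ times: positive edges preserve endpoint truth values, while negative edges induce a flip between body and head. Since $2k$ is even, any initial truth value assigned to a starting node propagates consistently around the entire cycle and returns to that node with the same value, so the cyclic assignment is self-consistent.

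Next, I would argue that two such propagations are always available---one that initialises the chosen anchor to true, one that initialises it to false---and each yields a different consistent valuation of every node in $C$. Under the Gelfond-Lifschitz characterisation of stable models, each such valuation corresponds to a distinct minimal model of the reduct, and hence to a distinct answer set of the sub-program induced by $C$.

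Operationally, I would then interpret ``breaking'' a negative edge $u\to v$ as removing $v$'s negation-based dependency on $u$, which cuts the cyclic constraint at $(u,v)$ and linearises the remaining structure into an acyclic chain on the same vertex set. Truth propagation along this chain (Theorem \ref{theo_pc} for positive edges, flipping across the remaining negative edges) produces a well-defined assignment; different choices of broken edge amount to anchoring the alternation at different positions in $C$, which recovers the multiple answer sets promised by the theorem.

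The main obstacle I anticipate is showing that an assignment obtained in this way lifts to an answer set of the \emph{entire} program $\Pi$, since nodes of $C$ may have additional incoming edges from outside $C$. I would handle this by noting that breaking only removes constraints (never adds any), and that any extension of the broken-cycle valuation that is consistent with the external rules remains a stable model of $\Pi$ via the Gelfond-Lifschitz transform; when $C$ is a maximal negative-cycle component the local argument suffices on its own, and otherwise it composes with the stable-model computation on the rest of the CNR graph.
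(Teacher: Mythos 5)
The paper itself offers no proof of Theorem \ref{theo_nec}: it is asserted to ``trivially follow'' from Definition \ref{def_nec} and the stable model semantics, with only the two-rule cycle \texttt{p :- not q. q :- not p.} as illustration. Your parity argument is a faithful elaboration of that intended reasoning --- an even number of negative edges makes the alternating assignment close up consistently, the two choices of anchor give two distinct valuations, and breaking a negative edge linearises the cycle --- so the core of your proposal is fine. One internal step is asserted rather than shown, though: that each consistent valuation really is a stable model of the cycle's sub-program. Parity alone does not give this (a purely positive cycle also admits the ``all true'' consistent valuation, yet Theorem \ref{theo_pc} together with minimality forces all nodes false); what does the work is foundedness in the Gelfond--Lifschitz reduct, namely that every segment of true nodes in a NEC is headed by a rule whose body is the negation of a node the valuation makes false, so the true atoms are supported while the opposite segment loses support. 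That is the sentence hiding behind ``corresponds to a distinct minimal model of the reduct'' and it should be made explicit.

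The genuine gap is your final lifting claim: it is not true that any extension of the broken-cycle valuation consistent with the external rules remains a stable model of the whole program $\Pi$. An external rule can derive a cycle node that one of your two worlds sets to false (destroying that world --- compare how Theorem \ref{theo_noc} is only rescued by exactly this kind of external support), and a constraint elsewhere can eliminate a world outright; so ``multiple answer sets'' is a statement about the cycle in isolation, not about $\Pi$. Fortunately that weaker statement is all the paper needs: in Algorithm \ref{alg2} a NEC is broken only inside a virtual root node, i.e., a strongly connected component with no remaining incoming edges (Section \ref{sec:graspalgorithm}), and the composition with the rest of the program is then delegated to the Splitting Theorem rather than to the blanket preservation claim you made. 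Restrict your last paragraph to that setting (no external in-edges at breaking time, global combination via splitting) and the argument is sound.
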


\noindent By \textit{breaking a cycle}, we mean assigning appropriate truth values to the nodes in the cycles obeying edge dependencies, and then deleting the edges. For example, if we have the graph corresponding to:
\begin{lstlisting}[language=prolog]
    p :- not q.      q :- not p.
\end{lstlisting}
\noindent then breaking this negative even cycle will produce two answer sets: \texttt{$\{$p$\}$} and \texttt{$\{$q$\}$}: one in which node {\tt p} is assigned true and {\tt q} false, and the other the opposite.

\begin{theorem}
If $C$ is a negative odd cycle, then the program $\pi$ is satisfiable iff there is at least one node assigned true in $C$.
\label{theo_noc}
\end{theorem}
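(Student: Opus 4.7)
The plan is to argue by a parity contradiction around the cycle, using the Gelfond--Lifschitz reduct characterization of answer sets. Write $C$ as $a_{0} \xrightarrow{s_{0}} a_{1} \xrightarrow{s_{1}} \cdots \xrightarrow{s_{n-1}} a_{0}$ with each $s_{i} \in \{+,-\}$ and the set $\{i : s_{i}=-\}$ having odd size. Each edge $s_{i}$ corresponds, by Definition~\ref{def1}, to some rule $r_{i}$ with head $a_{i+1}$ whose body contains $a_{i}$ (if $s_{i}=+$) or $\mathtt{not}\ a_{i}$ (if $s_{i}=-$).

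For the forward direction I would assume $\pi$ has answer set $M$ and suppose for contradiction that every $a_{i}$ is false in $M$. In the reduct $\pi^{M}$, the negative-edge rules lose their $\mathtt{not}\ a_{i}$ literal (since $a_{i}\notin M$), while positive-edge rules retain $a_{i}$ in the body. Since each $a_{i+1}$ is false in $M$, every rule with head $a_{i+1}$ must have an unsatisfied body in $\pi^M$; in particular $r_{i}$'s body must be blocked. A positive edge blocks $r_{i}$ automatically because $a_{i}$ is false, but a negative edge offers no cycle-internal block, so some other body literal of $r_{i}$ would have to be false. I would then show that with an odd number of negative edges one cannot consistently block every $r_{i}$ around the cycle: traversing $C$ and combining the blocking conditions closes into a parity inconsistency at $a_{0}$, forcing at least one $r_{i}$ to fire and hence its head true --- contradicting the all-false assumption.

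For the backward direction I would show that once at least one node $a_{j}\in C$ is assigned true, the parity obstruction disappears because the cycle is ``cut'' at $a_{j}$. Truth values can be propagated around $C\setminus\{a_{j}\}$ by following edge signs (positive edges preserve, negative edges flip), producing a locally consistent support structure that, subject to the remainder of $\pi$, extends to an answer set; conversely, without such a true node the forward argument shows no answer set can exist.

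The main obstacle will be the forward direction's handling of rules whose bodies contain literals outside $C$: such external literals could block some $r_{i}$ independently of the cycle, seemingly breaking the parity argument. I plan to address this by appealing to the CNR-to-DG transformation of Section~\ref{sec:cnrtodg}: the conjunction node for each $r_{i}$ absorbs all extra body literals into a single node on the cycle, so the odd parity becomes a property of the transformed dependency graph itself. One then runs the parity argument on the transformed graph, where the cycle nodes account for every body literal involved, and the contradiction goes through cleanly.
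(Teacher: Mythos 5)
The paper never actually proves this theorem (it declares Theorems 1--3 to ``trivially follow'' and only gives the three-atom example), so the question is whether your sketch is sound on its own; the central step of your forward direction is not. Under the assumption that every node of $C$ is false in an answer set $M$, the blocking conditions you collect around the cycle are not linked by any parity constraint: every positive edge's rule is already blocked by its false tail, and every negative edge's rule can only be blocked by body literals \emph{outside} $C$, and those external blockings are mutually independent. Concretely, for the program \texttt{p :- not q, s. \ q :- not r, s. \ r :- not p, s.} with \texttt{s} underivable, the empty set is an answer set in which all of \texttt{p, q, r} are false, so no ``parity inconsistency at $a_0$'' is derivable from the original program --- the oddness of $C$ never enters. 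Your CNR repair is the right instinct, but it replaces the parity argument rather than rescuing it: after the transformation every rule body is a single literal and the conjunction nodes lie on $C$ itself, so ``all nodes of $C$ false'' plus any single negative edge $a_i \rightarrow a_{i+1}$ means the reduct contains the rule $a_{i+1}$ with its body \texttt{not} $a_i$ satisfied, forcing $a_{i+1}\in M$ --- an immediate contradiction at one edge. (In my example the conjunction nodes, which are on the transformed cycle, come out \emph{true}, so the theorem's conclusion holds there; it is not an obstacle your parity argument has to overcome.) Note this direction therefore has nothing to do with oddness --- it holds for any cycle containing a negative edge, consistently with Theorem~\ref{theo_nec}; where parity genuinely matters is in whether the cycle can be satisfied \emph{internally} (an NEC can, by alternating values; a NOC cannot), which is a different statement from the one your forward argument targets.

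The backward direction cannot be proved in the strength you state it, because read globally it is false: one true node in $C$ does not make $\pi$ satisfiable (add an unrelated rule \texttt{a :- not a.} with \texttt{a} otherwise underivable, or a headless constraint that fails, and $\pi$ has no answer set no matter what happens on $C$). Your hedge ``subject to the remainder of $\pi$, extends to an answer set'' is exactly the point where the literal claim breaks. The theorem has to be read locally, in the way Algorithm~\ref{alg2} uses it: within a virtual node with no external in-edges, the NOC ceases to force unsatisfiability precisely when one of its nodes is made true (e.g., by overlap with an NEC), and the honest statement to prove is that the subprogram induced by the cycle's rules admits a consistent, supported assignment iff some node of $C$ is true --- the flip-parity argument you describe belongs to the ``only if'' half of that local claim, not to the global satisfiability of $\pi$.
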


\noindent If we have the graph corresponding to:
\begin{lstlisting}[language=prolog]
p :- not q.   q :- not r.   r :- not p.
\end{lstlisting}
\noindent then there will be no models unless one or more of {\tt p}, {\tt q}, or {\tt r} are true through other rules and are assigned the value \textit{True}.

\section{A Graph Algorithm for  Answer Sets} \label{sec:algorithm}

We have developed the \textbf{grASP} graph-based algorithm for finding answer sets. The philosophy of \textbf{grASP} is to translate an ASP program into a dependency graph, then propagate truth values from nodes whose values are known to other connected nodes, obeying the sign on the edge, until the values of all the nodes are fixed. However, due to possible existence of a large number of cycles, the propagation process is not straightforward. In \textbf{grASP}, we define a collection of rules for propagating values among nodes involved in cycles. These assignment rules take non-monotonicity of answer set program and the causal relationship among nodes in the dependency graph into account.

Unlike other SAT-solver based approaches, our graph based approach enables stratification of ASP programs based on dependence. The Splitting Theorem \cite{lifschitz1994splitting} can thus be  used to link the various levels, permitting values to be propagated among nodes more efficiently. Also, the existence of sub-structures (sub-graphs) makes an efficient recursive implementation algorithm possible.

\subsection{Input} \label{sec:input}

At present, the \textbf{grASP} algorithm takes only pure grounded propositional ASP programs as input. A valid rule should be in the form of $head$ :- $body.$, :- $body.$, or $head.$ For example, if we want to represent 3 balls, the form $ball(1..3)$ is invalid. Instead, we have to declare them separately as $ball(1).$ $ball(2).$ $ball(3).$
The input ASP program will be converted and saved into a directed-graph data structure. The conversion process is based on the concepts that were introduced in Section \ref{sec:dg}. 

\subsection{The grASP Algorithm} \label{sec:graspalgorithm}

The  \textbf{grASP} algorithm is designed in a recursive nature. Since a dependency graph represents the causal relationships among nodes, the reasoning should follow a topological order. We don't need to do topological sorting to obtain the order, instead, for each iteration, we just pick those nodes which have no in-coming edges. We call this kind of node a \textbf{root node}. After picking the root nodes, the algorithm checks their values. If a root node's value has not been fixed (no value yet), we assign \textit{False} to it. Otherwise, the root node will keep its value as is. Once all root nodes' values are fixed, we will propagate the values along their out-going edges in accordance with the sign on each edge (the propagation rules will be discussed in Section \ref{sec:propagate}). At the end of this iteration, we remove all root nodes from the graph, then pass the rest of the graph to the recursive call for the next iteration.

The input graph may contain cycles, and, of course, there will be no root nodes in a cycle. Therefore, this recursive process will leave a cycle unchanged. To cope with this issue, we proposed a novel solution, which wraps all nodes in the same cycles together, and treat the wrapped nodes as a single \textit{virtual} node. All the in-coming and out-going edges connecting the wrapped nodes to other nodes will be incident on or emanate from the virtual node. Thus, the graph is rendered acyclic and ready for the root-finding procedure. 

For each iteration of the recursive procedure, we have to treat regular root nodes and virtual root nodes differently. If the node is a regular node, we do the value assignment, but if it is a virtual node, we will have to \textit{break the cycles}. Cycle breaking means that we will remove the appropriate cycle edges by assigning truth values to the nodes involved (\textit{cycle breaking} will be discussed in this section later). After cycle breaking, we will pass the nodes and edges in this virtual node to another recursive call, because the virtual node can be seen as a substructure of the program. The returned value of the recursive call will be the answer set of the program constituting the virtual node. When all regular and virtual root nodes are processed, we will have to merge the values for propagation.

The value propagation in each iteration makes use of the \textbf{splitting theorem} \cite{lifschitz1994splitting} (details omitted due to lack of space). After removing root nodes,  rest of the graph acts as the \textit{top} strata and all of the predecessors constitute the \textit{bottom} strata, using the terminology of \cite{lifschitz1994splitting}. Thus, when we reach the last node in the topological order, we will get the whole answer set.

The cycle breaking procedure may return multiple results, because a negative even cycle generates two worlds (as discussed in Section \ref{sec:cycle}). Therefore, the merging of solution for the root nodes may possibly result in exponential number of solutions. For example, if the root nodes consists of one regular node and two virtual nodes, each virtual node generates two worlds \& the merging process will return four worlds. Of course, this exponential behavior is inherent to ASP.

\begin{algorithm}[tb]
{\small 
	\caption{grASP core algorithm}
	\begin{algorithmic}[1]
	\STATE \textbf{reasoningRec}(Graph g)
	    \IF{$g$ is $empty$}
	        \RETURN $g.nodes.values$
	    \ENDIF
	    \STATE $roots \gets findRoot(g)$
	    \STATE $regular \gets new$ $List()$
	    \STATE $virtual \gets new$ $List()$
        \FOR{$node$ in $roots$}
    	    \IF{$node.type == regular$}
    	        \IF{$node.value == None$}
    	           \STATE $node.value \gets False$
    	        \ENDIF
    	        \STATE $regular.append(node.value)$
    	   \ELSE
    	        \STATE $worlds \gets breakCycle(node)$
    	        \STATE $current \gets new$ $List()$
    	        \FOR{$w$ in $worlds$}
    	            \STATE $current.append($\textbf{reasoningRec}$(w))$
    	        \ENDFOR
    	        \STATE $virtual.append(values)$
    	   \ENDIF
    	\ENDFOR
    	
    	\STATE $root\_values \gets merge\_roots(regular, virtual)$
        \STATE $results \gets new$ $List()$
        \FOR{$sub\_world$ in $root\_values$}
            \STATE $sub\_g \gets g.copy()$
    	    \STATE $propagate(sub\_g, sub\_world)$
    	    \STATE $sub\_g.remove(roots)$
    	    \FOR{$ans$ in \textbf{reasoningRec}$(sub\_g)$}
    	        \STATE $results.append(sub\_world.extend(ans))$
    	    \ENDFOR
    	\ENDFOR
    	\RETURN $results$
	\end{algorithmic}
	\label{alg}}
\end{algorithm}

Algorithm \ref{alg} is the pseudocode of \textbf{grASP}'s core algorithm. Line 9-13 assigns values to regular root nodes, while line 14-20 deals with value assignments for virtual nodes. Cycle breaking happens at line 15. Line 23 merges all worlds generated by the root nodes, and prepares these values as the \textit{bottom} strata (Splitting Theorem \cite{lifschitz1994splitting}) for the next layer. Lines 24-32 deals with the recursive calls on the rest of the graph.

Note that we can optimize the system by adding consistency checking into the regular root node processing (line 11). Since the constraint node is always \textit{False} (see Section \ref{sec:constraintnode}), the algorithm can narrow the search by tracing back along its in-coming edges. For example, if a negative edge is incident into the constraint node, we will know that the node on the other end cannot be \textit{False}. Then if that node happens to be assigned a \textit{False} value,  we stop the search along that path.
 
\subsubsection{Propagation Rules} \label{sec:propagate}

In an ASP rule, the head term only can be assigned as \textit{True} if all its body term(s) are true. For example, in rule {\tt p :- not q.}, only when {\tt q} is unknown or known as \textit{False}, {\tt p} will be \textit{True}. For another example {\tt p :- q.}, {\tt p} will be \textit{True}, only when {\tt q} is known as \textit{True}. In both examples, {\tt p} will not be assigned as \textit{False}, until the reasoning of the whole program fails to make it \textit{True}. Therefore, mapping this to our graph representation, we obtain two propagation rules: (i) when a node $N$ has a \textit{True} value, assign \textit{True} to all the nodes connected to $N$ via positive out-going edges of $N$; (ii) when a $N$ node has a \textit{False} value, assign \textit{True} to all the nodes connected to $N$ via negative out-going edges of $N$.

\subsubsection{\bf Cycle Wrapping}
As previously mentioned (Section \ref{sec:graspalgorithm}), those nodes which are involved in the same cycles need to be wrapped into a virtual node. The reason being that we want the tangled nodes to act like a single node, in order to be found as a root. This requires the dependency of the wrapped nodes to be properly handled. The virtual node should inherit the dependencies of all the node it contains. These dependency relations include both incoming and outgoing edges.

Since cycles may be overlapped or nested with each other, we can make use of the strongly connected component concept in graph theory. Thus, each strongly connected component will be a virtual node.

\subsubsection{Cycle Breaking} \label{sec:cyclebreaking}
We can state the following corollary:

\begin{corollary}
In the dependency graph of an answer set program, if a node's value is True, all of its in-coming edges and negative out-going edges can be removed. If a node's value is False, then all its positive out-edges can be removed.
\label{corol}
\end{corollary}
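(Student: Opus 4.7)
The plan is to prove each of the two claims by combining the propagation rules of Section~\ref{sec:propagate} with the basic semantics of rule bodies, using the fact that any edge in the (CNR-converted) dependency graph corresponds to an occurrence of its source as a literal in the body of some rule with its target as head. For each of the three edge types mentioned, the task reduces to showing that, once the source's value is fixed, the edge can no longer carry new information through any subsequent propagation step. I would set up the proof as two cases mirroring the two halves of the statement, and within the True case further split between out-going and in-coming edges.

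First I would handle the case where $N$ is True. For a negative out-going edge from $N$ to some head node $H$, the corresponding occurrence of \texttt{not N} in $H$'s defining rule is False; after the CNR-DG transformation this occurrence becomes one disjunct in the disjunction that determines $H$'s conjunction node, and that disjunct is permanently unable to satisfy the conjunction node through $N$. Hence deleting the edge cannot change any future assignment computed by the propagation rules. For the in-coming edges into $N$, I would observe that $N$'s value is already committed, so propagation arriving along any such edge is at best redundant (re-asserting $N=$~True) and otherwise inert; formally, the Splitting Theorem lets us treat the stratum that produced $N$'s value as already processed, so the edges pointing into $N$ are no longer needed by the remaining computation.

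For the second part, where $N$ is False, the argument for positive out-going edges is parallel to the negative-edge case above: the literal $N$ in the body of the rule that gave rise to the edge is False, the corresponding disjunct after CNR-DG transformation is permanently unable to satisfy the downstream conjunction node via $N$, and so this edge is semantically inert and removable.

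The hardest step is the in-coming-edges subcase for a True-valued $N$, because unlike the two out-going cases the argument is not purely local to a single rule body: we must rule out that some later iteration of \textbf{grASP} (a further cycle-breaking, or propagation from a higher stratum) ever needs those edges again. The cleanest way I would discharge this is to invoke the Splitting Theorem together with the cycle-wrapping construction of Section~\ref{sec:graspalgorithm}. If $N$ lies in an acyclic region, its defining rules belong to a strictly lower stratum whose values are already fixed, so the upper strata only read $N$'s value and never re-examine its supports. If $N$ sits inside a strongly connected component, wrapping has already reduced that component to a single virtual root which is broken and exported before $N$'s value is propagated further, so once again no later step consults the in-coming edges. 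This establishes that their removal preserves the answer set computed by the remainder of the algorithm, completing the corollary.
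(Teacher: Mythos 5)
Your proposal is correct, and its case split (True node: out-going negative edges and in-coming edges; False node: positive out-going edges) matches the paper's, but the justifications are routed differently. The paper's proof is purely local to the propagation rules of Section~\ref{sec:propagate}: since propagation only ever assigns \textit{True} (never \textit{False}), a node that is already \textit{True} can gain nothing from its in-coming edges, a negative out-going edge cannot carry a \textit{True} source's value across, and a \textit{False} node cannot make a successor \textit{True} along a positive edge. You reach the same three conclusions, but you argue the out-going cases by unfolding edges back into rule-body occurrences via the CNR--DG transformation, and you treat the in-coming case as the ``hard'' one, discharging it with the Splitting Theorem and cycle-wrapping. That machinery is sound but unnecessary: the observation that no propagation step ever assigns \textit{False} already makes any propagation into an already-\textit{True} node redundant, with no appeal to strata or to where in the algorithm the node sits; the paper's one-line argument buys the same conclusion more robustly (it does not depend on the order in which \textbf{grASP} visits roots). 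Two smaller remarks: your mapping of a negative out-going edge to an occurrence of \texttt{not N} in the head's rule body is only right before the CNR--DG sign flip --- in the transformed graph a negative edge into a conjunction node corresponds to a \emph{positive} body occurrence --- though this does not affect the conclusion, since the propagation rules apply uniformly to the transformed graph; and the paper's proof additionally records why the corollary is asymmetric (in-coming edges of a \textit{False} node must be kept to detect inconsistency when a predecessor later tries to force it \textit{True}), a point your write-up does not touch but which is not required by the statement itself.
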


\begin{proof}
According to the propagation rules (discussed in Section \ref{sec:propagate}), a node can only be assigned \textit{True} through in-coming edges. When a node has already been known as \textit{True}, it no longer needs any assignment, then all in-coming edges become meaningless. Also, a negative edge won't be able to propagate the \textit{True} value to the other side. If a node has been known as \textit{False}, we still have to keep its in-coming edges to detect inconsistency (if some of its predecessors attempt to assign it as \textit{True}, the program is inconsistent). Note that a node labeled \textit{False} cannot make any node \textit{True} through an outgoing positive edge.
\end{proof}
\noindent We will use Corollary \ref{corol} to remove edges while breaking cycles. 
%
%
%
According to the algorithm design, cycles will only exit in virtual nodes, and a virtual node will only be visited as a root. Therefore, when we start looking at a cycle that needs to be broken, it means that there is no in-coming edge from outside connected to any node inside the cycle. As we know that there are only three types of cycles: positive cycles (PC), negative even cycles (NEC), negative odd cycles (NOC) (see Section \ref{sec:cycle}), the tangled cycles in a virtual node can be divided into four cases. We will illustrate these scenarios by using the pseudocode shown in Algorithm \ref{alg2}.

\begin{algorithm}[tb]
{\small
	\caption{Cycle breaking algorithm}
	\begin{algorithmic}[1]
	\STATE $result \gets new$ $List()$\\
	\IF{NEC exists}
	    \IF{NOC exists}
	        \IF{overlap of NEC and NOC exists}
	            \FOR{$node$ in $overlapped\_nodes$}
	                \STATE pick arbitrary NEC that contains $node$\\
	                \STATE break the NEC into two worlds (by Theorem \ref{theo_nec})\\
	                \STATE choose the world in which $node==True$\\
	                \STATE remove edges (by Corollary \ref{corol})\\
	                \STATE add the chosen world to $result$\\
	            \ENDFOR
	        \ELSE
	            \RETURN Unsatisfiable\\
	        \ENDIF
	   \ELSE
	        \STATE pick arbitrary NEC\\
	        \STATE break the NEC into two worlds (by Theorem \ref{theo_nec})\\
	        \STATE remove edges (by Corollary \ref{corol})\\
	        \STATE add both worlds to $result$\\
	   \ENDIF
	\ELSIF{NOC exists}
	   \RETURN Unsatisfiable\\
	\ELSE
	    \STATE assign False value to every node\\
	    \STATE remove edges (by Corollary \ref{corol})\\
        \STATE add the world to $result$
	\ENDIF
	\RETURN $result$
	\end{algorithmic}
	\label{alg2}}
\end{algorithm}

The most important thing for cycle breaking is that we need to follow a specific order with respect to types of cycles.
As we discussed in Theorem \ref{theo_nec}, NECs create worlds, NOCs kill worlds. Which means that a negative even cycle can divide a world into two, while a negative odd cycle will make one or more worlds unsatisfiable and thus disappear (this happens if no node in the NOC has the value \textit{True}). Therefore, when breaking a virtual node that has hybrid cycles, we need to first check those NOCs to make sure that these cycles are satisfiable. According to Theorem \ref{theo_noc}, a NOC will make the world satisfiable if and only if it has \textit{True} nodes. In a virtual node, the only way for a NOC to have \textit{True} node would be by overlapping with a NEC. The overlapping node is assigned \textit{True} which means that the NEC admits only possible world, that is the one with the overlapped node as \textit{True} (see Example \ref{eg1}). Refer to Algorithm \ref{alg2}, line 4-11 deals with this situation.

\begin{figure}[tb]
    \centering
    \includegraphics[scale=0.25]{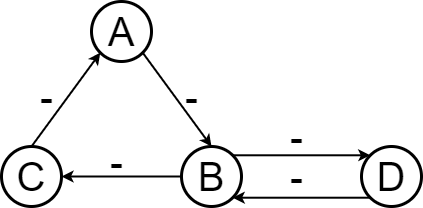}
    \caption{A Cycle Breaking Example}
    \label{fig:fig5}
\end{figure}

\begin{example}[Breaking of Overlapped Cycles]
In Figure \ref{fig:fig5}, \{$A$, $B$, $C$\} is an negative odd cycle, and \{$B$, $D$\} is a negative even cycle. We first need to make sure the negative odd cycle can be broken (i.e., a satisfying assignment made to the nodes in the NOC). In traversal, we found that node $B$ can be assigned as \textit{true} by breaking the negative even cycle \{$B$, $D$\}. Therefore, we can get a model \{A, B\} by keeping $B$ as \textit{true}.
\label{eg1}
\end{example}

When there is a NOC which does not overlaps with any NEC, no assignment is possible (line 13 and 22 in the Algorithm \ref{alg}). If there is neither NEC, nor NOC, the only possible situation is that all nodes that are connected by positive edges. Since we are in the virtual node which has no predecessors, per Theorem \ref{theo_pc}, there is no way to make these nodes \textit{True}; so we assign \textit{False} to every node, and delete all edges.

\subsection{Implementation and Performance} \label{sec:implementation} 

The \textbf{grASP} system has been written in python. A C++ version under the same system architecture design (Figure \ref{fig:fig6}) is under development. We adopted Johnson's algorithm which was proposed for finding all the elementary circuits of a directed graph ~\cite{johnson1975finding}. Our Python implementation uses the $DiGraph$ data structure and \textit{simple\_cycles} function from NetworkX ~\cite{SciPyProceedings_11}.

The performance testing for \textbf{grASP} was done on two types of programs: (i) established benchmarks such as N-queens; (ii) randomly generated answer set programs. Clingo \cite{DBLP:journals/corr/GebserKKS14} was chosen as the system to compare with. For the first phase, we chose four classic NP problems (map coloring problem, Hamiltonian cycle problem, etc.). The results are shown in Table \ref{tb1}. For the second phase, we used a novel propositional ASP program generator that we have developed for this purpose to generate random programs. The testing performed five rounds with 100 programs each (Table \ref{tb2}). The performance comparison shows that for programs with simpler cycle conditions, \textbf{grASP} achieved similar speed to Clingo, but when solving programs with large number of cycles, \textbf{grASP} is slowed down by the cycle breaking process. 

\begin{table}
\centering
\begin{tabular}{lrr}
\toprule
Problem  & Clingo & grASP \\
\midrule
Coloring-10 nodes & 0.004  & 0.693\\
Coloring-4 nodes & 0.001  & 0.068\\
Ham Cycle-4 nodes(no cross edges) & 0.001  & 0.052\\
Ham Cycle-4 nodes(fully connected) & 0.002  & 0.089\\
Birds    & 0.001  & 0.001\\
Stream Reasoning   & 0.001  & 0.001\\
\bottomrule
\end{tabular}
\caption{Performance Comparison on Classic Problems}
\label{tb1}
\end{table}

\begin{table}
\centering
\begin{tabular}{lrrrrr}
\toprule
Round & \#Rules & \#NEC & \#NOC  & Clingo & grASP \\
\midrule
1 & 3231 & 61813 & 61781 & 0.032  & 0.351\\
2 & 3078 & 28593 & 29040 & 0.027  & 0.276\\
3 & 3307 & 39346 & 39433 & 0.028  & 0.087\\
4 & 3069 & 14581 & 14868 & 0.022  & 0.405\\
5 & 3074 & 23017 & 22984 & 0.024  & 0.801\\
\bottomrule
\end{tabular}
\caption{Performance  on Random Problems (time in seconds)}
\label{tb2}
\end{table}

\begin{figure}[tb]
    \centering
    \includegraphics[scale=0.3]{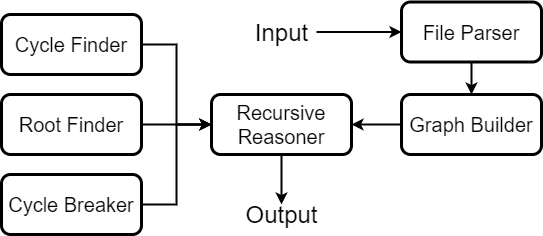}
    \caption{grASP System Architecture}
    \label{fig:fig6}
\end{figure}

\section{Causal Justification} \label{sec:causaljustification}

A major advantage of grASP is that it provides justification as to why a literal is in an answer set for free. Providing justification is a major problem for implementations of ASP that are based on SAT solvers.  In contrast to SAT-based ASP solvers, \textbf{grASP} maintains the information about structure of an ASP program while computing stable models. Indeed, the resulting graph itself is a justification tree. Since the truth values of all vertices are propagated along edges, we are able to find a justification by looking at the effective out-going edges and their ending nodes. Here the effective out-going edge refer to an edge that actually propagated \textit{True} value to its ending node. According to propagation rules that are discussed in Section \ref{sec:propagate}, there are only two type of \textit{effective} out-going edges: (i) positive edge coming from a \textit{True} node; (ii) negative edge coming from a \textit{False} node. Every effective out-going edge should point to a \textit{True} node.

\begin{example}[Graph Coloring Problem]
 Given a planar graph, color each node (red/green/blue), so that no two connected nodes have the same color.
\label{eg2}
\end{example}

Let's take  graph coloring as an example. The problem is defined in Example \ref{eg2}. For simplicity, we use a configuration with 4 nodes and 4 edges. We got 18 different answer sets. Now we want to justify one of them, which is: \textit{blue(1), red(2), blue(3), green(4)}.
The justification first picks effective out-going edges, then check each edge's ending node. If all those ending nodes are \textit{True}, the answer set is justified. Figure \ref{fig:fig7} shows a part of the justification graph. The red-circled nodes are \textit{True} nodes, while the black ones are \textit{False}. The path justifying a literal can be traced on this graph.

\begin{figure}[tb]
    \centering
    \includegraphics[scale=0.35]{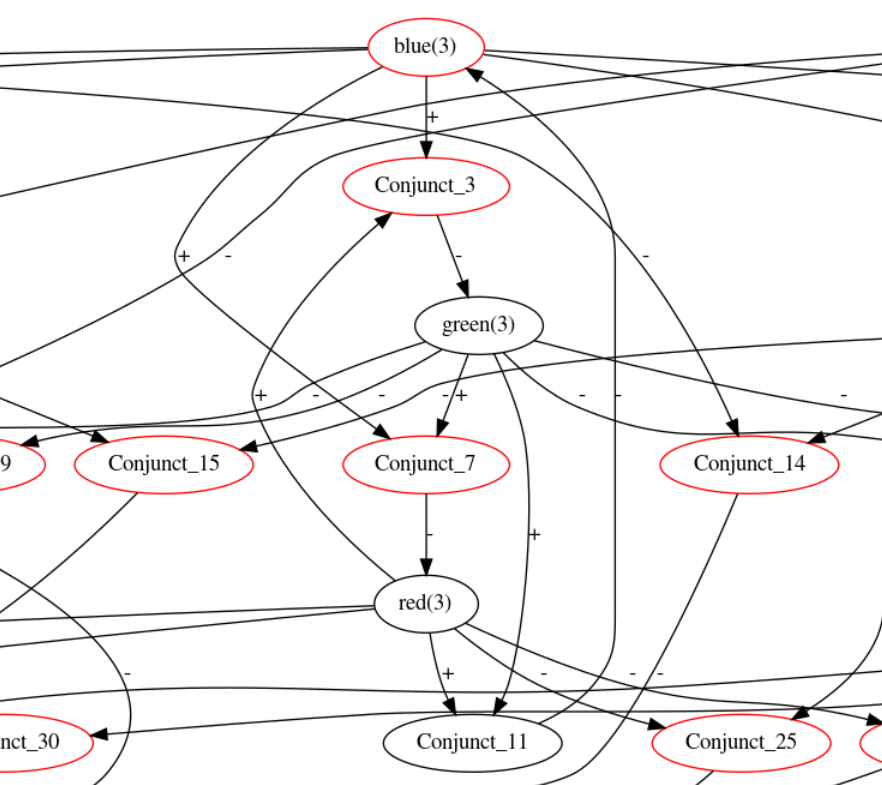}
    \caption{Part of the Justification Graph}
    \label{fig:fig7}
\end{figure}

\section{Conclusion and Future Work} \label{sec:conclusion}

We proposed a dependency graph based approach to compute the answer sets of an answer set program. We use a novel transformation to ensure that each program has a unique dependency graph, as otherwise multiple programs can have the same dependency graph. A major advantage of our algorithm is that it can produce a justification for any proposition that is entailed by a program. 
Currently, \textbf{grASP} only works for propositional answer set programs. Our goal is to extend it so that answer sets of datalog programs (i.e., answer set programs with predicates whose arguments are limited to variables and constants) can also be computed without having to ground them first. This will be achieved by dynamically propagating bindings along the edges connecting the nodes in our algorithm's propagation phase. 

The \textbf{grASP} system is being developed with three main applications in mind (i) \textbf{justification}: being able to justify each literal in the answer set, (ii) \textbf{program debugging}: if a program has no answer sets, then find (small) changes to the program (with some guidance from the user) that will make the program succeed, (iii) \textbf{commonsense reasoning}: given a query, we want to not only find out a justification for it, but also the related associative knowledge. E.g., if we infer that Tweety flies because Tweety is a bird due to the rule \textit{flies(X) :- bird(X)}., then we also want to know the associative concept that Tweety has wings from the rule \textit{haswings(X) :- bird(X)}. Additionally, \textbf{grASP} paves the way for implementing ASP without grounding, a major challenge even today notwithstanding work done by \cite{arias2018constraint,sasp}.

Even though the speed of execution on \textbf{grASP} is slower compared to CLINGO, it still finds solutions to NP-hard problems in a reasonable time. We plan to investigate optimizing techniques such as conflict driven clause learning \cite{silva2003grasp,gebser22} to speed up execution. \textbf{grASP} is more than just an ASP solver: its visualization feature makes it suitable for educational purpose and for debugging. Moreover, a graph-based approach brings new possibilities for applying optimization. All source code \& test data of this project will be made open to the public.
\bibliographystyle{named}
\bibliography{ijcai21}

\begin{thebibliography}{}

\bibitem[\protect\citeauthoryear{Anger \bgroup \em et al.\egroup
  }{2001}]{anger2001nomore}
Christian Anger, Kathrin Konczak, and Thomas Linke.
\newblock Nomore: A system for non-monotonie reasoning under answer set
  semantics.
\newblock {\em PSC 802 BOX 14 FPO 09499-0014}, page 406, 2001.

\bibitem[\protect\citeauthoryear{Arias \bgroup \em et al.\egroup
  }{2018}]{arias2018constraint}
Joaqu{\'i}n Arias, Manuel Carro, Elmer Salazar, Kyle Marple, and Gopal Gupta.
\newblock Constraint answer set programming without grounding.
\newblock {\em Theory and Practice of Logic Programming}, 18(3-4):337--354,
  2018.

\bibitem[\protect\citeauthoryear{Baral}{2003}]{baral}
C.~Baral.
\newblock {\em Knowledge representation, reasoning and declarative problem
  solving}.
\newblock Cambridge University Press, 2003.

\bibitem[\protect\citeauthoryear{Eiter \bgroup \em et al.\egroup
  }{2000}]{EFPL12}
Thomas Eiter, Wolfgang Faber, Nicola Leone, and Gerald Pfeifer.
\newblock Declarative problem-solving using the dlv system.
\newblock In {\em Logic-based artificial intelligence}, pages 79--103.
  Springer, 2000.

\bibitem[\protect\citeauthoryear{Gebser \bgroup \em et al.\egroup
  }{2007}]{gebser22}
Martin Gebser, Benjamin Kaufmann, Andr{\'e} Neumann, and Torsten Schaub.
\newblock Conflict-driven answer set solving.
\newblock In {\em IJCAI}, volume~7, pages 386--392, 2007.

\bibitem[\protect\citeauthoryear{Gebser \bgroup \em et al.\egroup
  }{2014}]{DBLP:journals/corr/GebserKKS14}
Martin Gebser, Roland Kaminski, Benjamin Kaufmann, and Torsten Schaub.
\newblock Clingo = {ASP} + control: Preliminary report.
\newblock {\em CoRR}, abs/1405.3694, 2014.

\bibitem[\protect\citeauthoryear{Gelfond and Kahl}{2014}]{gelfond2014knowledge}
Michael Gelfond and Yulia Kahl.
\newblock {\em Knowledge representation, reasoning, and the design of
  intelligent agents: The answer-set programming approach}.
\newblock Cambridge University Press, 2014.

\bibitem[\protect\citeauthoryear{Gelfond and Lifschitz}{1988}]{GL2}
Michael Gelfond and Vladimir Lifschitz.
\newblock The stable model semantics for logic programming.
\newblock In {\em ICLP/SLP}, volume~88, pages 1070--1080, 1988.

\bibitem[\protect\citeauthoryear{Hagberg \bgroup \em et al.\egroup
  }{2008}]{SciPyProceedings_11}
Aric~A. Hagberg, Daniel~A. Schult, and Pieter~J. Swart.
\newblock Exploring network structure, dynamics, and function using networkx.
\newblock In Ga\"el Varoquaux, Travis Vaught, and Jarrod Millman, editors, {\em
  Proceedings of the 7th Python in Science Conference}, pages 11 -- 15,
  Pasadena, CA USA, 2008.

\bibitem[\protect\citeauthoryear{Johnson}{1975}]{johnson1975finding}
Donald~B Johnson.
\newblock Finding all the elementary circuits of a directed graph.
\newblock {\em SIAM Journal on Computing}, 4(1):77--84, 1975.

\bibitem[\protect\citeauthoryear{Konczak \bgroup \em et al.\egroup
  }{2005}]{konczak2005graphs}
Kathrin Konczak, Thomas Linke, and Torsten Schaub.
\newblock Graphs and colorings for answer set programming.
\newblock {\em arXiv preprint cs/0502082}, 2005.

\bibitem[\protect\citeauthoryear{Lifschitz and
  Turner}{1994}]{lifschitz1994splitting}
Vladimir Lifschitz and Hudson Turner.
\newblock Splitting a logic program.
\newblock In {\em ICLP}, volume~94, pages 23--37, 1994.

\bibitem[\protect\citeauthoryear{Linke and Sarsakov}{2005}]{linke2005suitable}
Thomas Linke and Vladimir Sarsakov.
\newblock Suitable graphs for answer set programming.
\newblock In {\em International Conference on Logic for Programming Artificial
  Intelligence and Reasoning}, pages 154--168. Springer, 2005.

\bibitem[\protect\citeauthoryear{Marek and Truszczy{\'n}ski}{1999}]{MT5}
Victor~W Marek and Miroslaw Truszczy{\'n}ski.
\newblock Stable models and an alternative logic programming paradigm.
\newblock In {\em The Logic Programming Paradigm}, pages 375--398. Springer,
  1999.

\bibitem[\protect\citeauthoryear{Marple \bgroup \em et al.\egroup
  }{2017}]{sasp}
Kyle Marple, Elmer Salazar, and Gopal Gupta.
\newblock Computing stable models of normal logic programs without grounding.
\newblock {\em arXiv preprint arXiv:1709.00501}, 2017.

\bibitem[\protect\citeauthoryear{Silva and Sakallah}{2003}]{silva2003grasp}
Jo{\~a}o P~Marques Silva and Karem~A Sakallah.
\newblock Grasp—a new search algorithm for satisfiability.
\newblock In {\em The Best of ICCAD}, pages 73--89. Springer, 2003.

\bibitem[\protect\citeauthoryear{Simons \bgroup \em et al.\egroup
  }{2002}]{SNS18}
Patrik Simons, Ilkka Niemel{\"a}, and Timo Soininen.
\newblock Extending and implementing the stable model semantics.
\newblock {\em Artificial Intelligence}, 138(1-2):181--234, 2002.

\end{thebibliography}

\end{document}